\newcolumntype{d}[1]{D..{#1}}
\newcolumntype{C}{>{\centering\arraybackslash}X}
\newcommand{\reals}{\mathbb{R}}
\newcommand{\R}{\reals}
\newcommand{\Ccal}{\mathcal{C}}
\newcommand{\Dcal}{\mathcal{D}}
\newcommand{\Ecal}{\mathcal{E}}
\newcommand{\Fcal}{\mathcal{F}}
\newcommand{\Gcal}{\mathcal{G}}
\newcommand{\Lcal}{\mathcal{L}}
\newcommand{\Scal}{\mathcal{S}}
\newcommand{\Tcal}{\mathcal{T}}
\newcommand{\Ucal}{\mathcal{U}}
\newcommand{\Vcal}{\mathcal{V}}
\newcommand{\Xcal}{\mathcal{X}}
\newcommand{\eqn}[1]{\begin{align} #1 \end{align}}
\theoremstyle{plain}
\newtheorem{theorem}{Theorem}
\newtheorem{lemma}{Lemma}
\theoremstyle{definition}
\newtheorem{definition}{Definition}
\theoremstyle{remark}
\definecolor{yellow}{cmyk}{0.0,0.10,0.95,0.0}
\definecolor{pred}{cmyk}{0,0.8,0.70,0.0}
\definecolor{bluedefined}{cmyk}{0.46, 0.10, 0, 0.0}
\def\BState{\State\hskip-\ALG@thistlm}
\theoremstyle{plain}
\theoremstyle{plain}
\newtheorem{prob}{Problem}
\theoremstyle{definition}
\newtheorem{assumption}{Assumption}
\newtheorem{remark}{Remark}
\newcolumntype{M}[1]{>{\centering\arraybackslash}m{#1}}
\newcommand{\RNum}[1]{\uppercase\expandafter{\romannumeral #1\relax}}
\newcommand{\gatekeeper}{\texttt{gatekeeper}}
\algrenewcommand\textproc{}
\algrenewcommand\algorithmicrequire{\textbf{Input:}}
\algrenewcommand\algorithmicensure{\textbf{Output:}}
\author{Kaleb Ben Naveed$^{1}$, Devansh R. Agrawal$^{1}$, and Dimitra Panagou$^{1,2}$%
\thanks{$^{*}$The authors would like to acknowledge the support of the National Science Foundation (NSF) under grant no. 2223845 and grant no. 1942907.}
\thanks{$^{1}$Department of Robotics, University of Michigan, Ann Arbor, MI, 48109 USA. 
{\tt\small \{kbnaveed@umich.edu\}}}
\thanks{$^{2}$Department of Aerospace Engineering, University of Michigan, Ann Arbor, MI, 48109 USA. }}
\title{\LARGE \bf
A Formal \gatekeeper{} Framework for Safe Dual Control with Active Exploration
} 
\newcommand\footnoteref[1]{\protected@xdef\@thefnmark{\ref{#1}}\@footnotemark}
\begin{document}
\maketitle
\thispagestyle{empty}
\pagestyle{empty}

\begin{abstract}
Planning safe trajectories under model uncertainty is a fundamental challenge. Robust planning ensures safety by considering worst-case realizations, yet ignores uncertainty reduction and leads to overly conservative behavior. Actively reducing uncertainty on-the-fly during a nominal mission defines the dual control problem. Most approaches address this by adding a weighted exploration term to the cost, tuned to trade off the nominal objective and uncertainty reduction, but without formal consideration of when exploration is beneficial. Moreover, safety is enforced in some methods but not in others. We propose a framework that integrates robust planning with active exploration under formal guarantees as follows: The key innovation and contribution is that exploration is pursued only when it provides a verifiable improvement without compromising safety. To achieve this, we utilize our earlier work on \gatekeeper{} as an architecture for safety verification, and extend it so that it generates both safe and informative trajectories that reduce uncertainty and the cost of the mission, or keep it within a user-defined budget. The methodology is evaluated via simulation case studies on the online dual control of a quadrotor under parametric uncertainty.  \href{https://kalebbennaveed.github.io/tro_dual_gtk/}{[Paper Website]}\footnote{https://kalebbennaveed.github.io/tro_dual_gtk/}

\end{abstract}


\section{Introduction}
Planning safe trajectories under model uncertainty is a fundamental challenge in control and robotics \cite{naveed2025enabling, lopez2019dynamic_robust2, lew2023risk_robust_3, sasfi2023robust}. Such uncertainty often arises from unknown parameters, such as aerodynamic drag or tire–road friction. Robust frameworks—including tube-based MPC \cite{lopez2019dynamic_robust2, agrawal2024gatekeeper}, sampling-based methods \cite{blackmore2010probabilistic, lew2023risk_robust_3}, control barrier function based methods \cite{9482751, 9683085}, and contraction approaches \cite{7989693, sasfi2023robust}—guarantee constraint satisfaction under worst-case disturbances and model mismatch. Yet, these guarantees come at the cost of conservatism, since the methods do not actively reduce model uncertainty during execution.

A complementary perspective is offered by the dual control problem, which explicitly balances exploitation (achieving the control objective) with exploration (actively reducing uncertainty) \cite{feldbaum1961theory}. Here, uncertainty can be divided into aleatoric, reflecting irreducible randomness, and epistemic, stemming from unknown but learnable parameters. Dual control focuses on reducing the latter while pursuing the mission objective. Many approaches have been proposed in this direction \cite{lew2022safe, parsi2020active, Kim-RSS-23, davydov2025first, sasfi2023robust_robust_1, Barcelos-RSS-21, hu2024active, hibbard2023safely, prajapat2025safe, luo2024act, parsi2023dual, mesbah2018stochastic, li2022dual, arcari2020dual, knaup2024adaptive, johnson2025implicit, soloperto2020augmenting, Zhang-RSS-25}. Some focus on decoupling the two phases, first reducing uncertainty and then planning robust policies \cite{lew2022safe, parsi2020active, Kim-RSS-23, davydov2025first}. Others rely on passive reduction, using data naturally gathered during the mission to reduce model uncertainty \cite{sasfi2023robust_robust_1, Barcelos-RSS-21, hu2024active, lopez2019adaptive}. More recent work leverages active exploration to reduce model uncertainty in parallel with mission execution. \cite{hibbard2023safely, prajapat2025safe, luo2024act, parsi2023dual}.

Despite progress, several challenges remain. Safety is not inherent in most dual control formulations, with exploration often pursued without guarantees of constraint satisfaction \cite{parsi2020active, Barcelos-RSS-21, luo2024act, arcari2020dual}. Also, the exploration–exploitation trade-off is usually managed by adding a weighted exploration term to the cost \cite{parsi2023dual, Barcelos-RSS-21}, without a principled decision on when exploration should occur. As a result, exploration may drive mission cost too high when the weight is too large, or be suppressed when too small, hindering uncertainty reduction and limiting performance gains.

To address these challenges, we propose a framework that integrates safety, active exploration, and budget feasibility \cite{cherenson2025autonomy} in trajectory planning under bounded model uncertainty. Specifically, the robot must respect state and input constraints, actively reduce parameter uncertainty through exploration, and ensure that the cost functional (which can express e.g., control effort, execution time, or tracking error) remains within a prescribed budget. Our approach is inspired by the \gatekeeper{} architecture \cite{agrawal2024gatekeeper}. At each planning cycle, a robust backup trajectory is first computed to guarantee safety and provide a feasible fallback. In parallel, informative candidate trajectories are generated to promote parameter identification. Each candidate is evaluated against the backup by predicting its impact on both uncertainty reduction and mission cost, and it is committed only if it reduces uncertainty while remaining within the budget; otherwise, the system defaults to the backup. This ensures that exploration is pursued only when provably safe, beneficial, and budget-feasible, thereby addressing the dual control trade-off while guaranteeing safety. We demonstrate the effectiveness of this framework on two case studies.
\subsubsection*{\textbf{Notation}}
Let $\mathbb{R}$, $\mathbb{R}_{\geq 0}$, and $\mathbb{R}_{> 0}$ denote the set of real, non-negative real, and positive real numbers, respectively. 

\section{Preliminaries \& Problem Formulation}\label{sec:prob}
\subsubsection{System Model}
\label{sec:system_model}
We consider a class of nonlinear control-affine systems with parametric uncertainty and bounded additive disturbance:
\begin{align}
    \dot{x} &= 
    f_{0}(x) + F(x)\theta_f + \big(g_{0}(x) + G(x)\theta_g
    \big)u+ w(t), \label{eq:system_model}
\end{align}
where $x \in \Xcal \subset \R^n$ is the state, $u \in \Ucal \subset \R^m$ is the control input, $\theta_f \in \Theta_f \subset \R^{p_f}$ is the unknown drift parameter vector contained in a known compact set, and $\theta_g = \begin{bmatrix}\theta_{g,1} & \cdots & \theta_{g,m}\end{bmatrix} \in \Theta_g \subset \R^{p_g \times m}$ is the unknown input parameter matrix contained in a known compact set, with $\theta_{g,j} \in \R^{p_g}$ denoting its $j$-th column. The uncertain parameters are collected into the vector as 
$\theta = \begin{bmatrix} \theta_f^\top & \theta_{g,1}^\top & \cdots & \theta_{g,m}^\top \end{bmatrix}^\top \in \Theta \subset \R^{p_f+mp_g}$.
The functions $f_0: \R^n \to \R^n$ and $g_0: \R^n \to \R^{n \times m}$ are the known nominal drift and input maps, respectively. The functions $F: \R^n \to \R^{n \times p_f}$ and $G: \R^n \to \R^{n \times p_g}$ are the known drift and input regressors, respectively. Furthermore, we denote $f(x,\theta_f) = f_{0}(x) + F(x)\theta_f$ and $g(x,u,\theta_g) = \big(g_{0}(x) + G(x)\theta_g
    \big)u$.
\begin{assumption}
\label{asm:noise_derivatives}
The additive disturbance $w:[t_0,\infty)\to\R^n$ is bounded $\sup_{t \ge t_0} \|w(t)\| = \overline{w}.$ 
\end{assumption}
\begin{assumption}
\label{asm:full_state} The full system state $x(t)$ $\forall t \in [t_0, \infty)$ is assumed to be perfectly observed.
\end{assumption}
\subsubsection{Linear-in-Parameter Regressor Form}
\label{sec:lip_form}
Under Assumption~\ref{asm:noise_derivatives} and Assumption~\ref{asm:full_state},
the system dynamics \eqref{eq:system_model} can be rearranged to obtain a regression model
that is linear in the unknown parameter vector~$\theta$.
Specifically, define the signal
\eqn{
z(t) = \dot{x}(t) - f_0(x(t)) - g_0(x(t))u(t).
}
\footnote{In practice, $\dot x(t)$ is reconstructed from sampled state measurements via finite differences. Any resulting bounded error can be absorbed in $w(t)$.}
Substituting \eqref{eq:system_model} yields
\eqn{
z(t) = F(x(t))\theta_f + (G(x(t))\theta_g) u(t) + w(t).
}
Using the representation
$\theta_g = \begin{bmatrix}\theta_{g,1} & \cdots & \theta_{g,m}\end{bmatrix}$,
the input-dependent term can be written as
\eqn{
(G(x)\theta_g) u
= \sum_{j=1}^m u_j G(x)\theta_{g,j}.
}
Define the regressor matrix
\eqn{
\Phi(x,u) =
\big[
F(x)\;\; u_1 G(x)\;\; \cdots\;\; u_m G(x)
\big]
\in \R^{n \times (p_f + m p_g)}.
}
Then the dynamics admit the linear-in-parameters (LIP) representation
\eqn{
\label{eq:lip_form_PE}
z(t) = \Phi(x(t),u(t))\,\theta + w(t).
}
The LIP model \eqref{eq:lip_form_PE} forms the basis for set-membership parameter
identification and uncertainty set refinement in the subsequent sections.
To ensure identifiability, the regressor must be persistently exciting (PE) \cite{PE_1_narendra1987persistent}, which is formally defined as follows:
\begin{definition}[Persistent Excitation]\label{def:PE}
Consider the LIP form in \eqref{eq:lip_form_PE} where $\Phi(x, u)$ is the regressor. The regressor is said to be persistently exciting if there exist constants $T >0$, $\alpha >0$, and $\beta >0$ such that $\forall t > t_0$, 
\begin{subequations}
  \eqn{
  \Gcal(t, T) = \int_{t}^{t+T} \Phi(x(\tau), &u(\tau))^\top \Phi(x(\tau), u(\tau))\, d\tau \\
\alpha I \leq \, &\Gcal(t, T) \, \leq \beta I,
}  
\end{subequations}
\end{definition}
The uncertainty in the parameter set can captured by a \textit{width} of the parameter set $\Theta$. 
\begin{definition}[Width of Parameter Set]
\label{def:width}
Let $\Theta \subset \R^p$ be a parameter set and $\Dcal$ be a finite set of unit directions.  
For any direction $d \in \Dcal$, the width of $\Theta$ along $d$ is defined as
\eqn{
w_d(\Theta) \;=\; \sup_{\theta \in \Theta} d^\top \theta \;-\; \inf_{\theta \in \Theta} d^\top \theta.
}
This quantity gives the extent of $\Theta$ along direction $d$. For $p=1$ with $d=1$ and $\Theta=[\theta_{\min},\theta_{\max}]$, it simplifies to $w_d(\Theta)=\theta_{\max}-\theta_{\min}$.
\end{definition}

\subsubsection{Problem Statement}
We provide definitions needed for the problem formulation. We consider a dual control setting where the mission must be completed while reducing uncertainty in bounded parameters, with the state restricted to a safe set $\Scal \subseteq \Xcal$ and the cost limited by a given budget.

\begin{definition}[Trajectory]
\label{def:traj}
Let $\Tcal=[t_i,t_f]\subset\R$. A trajectory is a pair $ p \;=\; \big(p_x:\Tcal\to\Xcal,\;\; p_u:\Tcal\to\Ucal\big)$
such that
\eqn{
\dot p_x(t)
= f\big(p_x(t),\hat\theta_f\big)
+ g\big(p_x(t),p_u(t),\hat\theta_g\big),
\quad \forall\, t\in\Tcal .
\label{eq:traj_dynamics}
}
Here, $\hat\theta_f\in\Theta_f$ and $\hat\theta_g\in\Theta_g$ are fixed parameter estimates used to define the nominal model.
\end{definition}

The pair $(p_x, p_u)$ thus specifies a nominal state–input trajectory for the estimated model. Building on this, we define the robust tube trajectory.

\begin{figure*}[t]
  \centering
  \includegraphics[width=2.05\columnwidth]{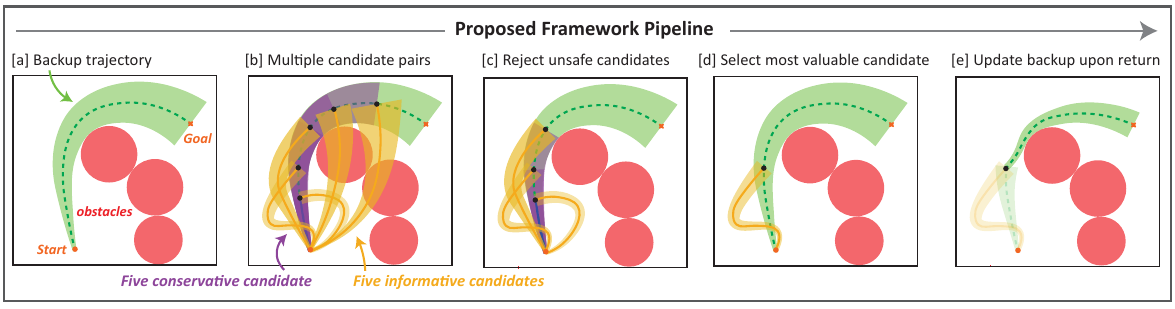}
  \caption{The proposed framework at a glance. Starting from a conservative backup trajectory, multiple candidate trajectories are generated, including both conservative and informative options. Unsafe candidates are rejected, and the most valuable safe candidate is selected for execution. Upon returning to the backup trajectory, the backup plan is updated. The candidate trajectories represent variations in the trajectory state space; their depiction in the physical environment is purely illustrative.}
  \vspace{-13pt}
  \label{fig:method_squence}
\end{figure*}
\begin{definition}[Tube cross-section]
\label{def:tube-section}
Let $\Theta$ be the parameter set. Given a trajectory $p=(p_x,p_u)$ on
$\Tcal=[t_i,t_f]$, a \emph{tube cross-section} is a set-valued map
$\Ecal:\Tcal \rightrightarrows \R^n$ such that for each $t\in\Tcal$ the set
$\Ecal(t)\subset\R^n$ is compact, contains the origin, and is parameterized 
as
\eqn{
\Ecal(t) = \Ecal \big(t;\Theta,\,\overline{w},\,p_x(t),\,p_u(t)\big).
}
By construction, $\Ecal(t)$ bounds the deviation of the true state from $p_x(t)$ for all
$(\theta_f,\theta_g)\in\Theta$ and the bounded disturbance
$\overline w $ satisfying~\cref{asm:noise_derivatives}. The associated tube is
\eqn{
\Omega(t) = p_x(t)\oplus \Ecal(t)
= \{\,x\in\Xcal : x-p_x(t)\in \Ecal(t)\,\}.
}
\end{definition}

\begin{definition}[Robust controlled-invariant (RCI) tube trajectory]
\label{def:robust-tube}
Let $p=(p_x,p_u)$ be a trajectory on $\Tcal=[t_i,t_f]$ as defined in~\eqref{eq:traj_dynamics}.
Let $\Ecal:\Tcal \rightrightarrows \R^n$ be the tube cross-section of~\cref{def:tube-section}, and define
\eqn{
\Omega(t) \;=\; p_x(t)\oplus \Ecal(t).
}
The augmented trajectory is $p^\Omega:\Tcal \to \Xcal\times\Ucal\times 2^{\Xcal}$
\eqn{
p^\Omega(t) = \big(p_x(t),\,p_u(t),\,\Omega(t)\big).
}
We call $p^\Omega$ a \emph{RCI tube trajectory} if:
\begin{enumerate}[label=(\roman*)]
\item The nominal plan satisfies the tightened constraints
\eqn{
p_x(t)\in\overline{\Scal}(t),\quad
p_u(t)\in\overline{\Ucal}(t),\quad
p_x(t_f)\in\overline{\Gcal},
}
where $ \overline{\Scal}(t)=\Scal\ominus \Ecal(t),
\overline{\Gcal}=\Gcal\ominus \Ecal(t_f),
\overline{\Ucal}(t)=\Ucal\ominus \Delta U(t).$
Here $\Delta U:\Tcal\rightrightarrows\R^m$ is any set-valued map with $0\in\Delta U(t)$ for all $t$, representing input deviations relative to $p_u(t)$.\footnote{As an example, see \cite{lopez2019dynamic_robust2} for specific construction of the constraint tightening ($\overline{\Scal}(t)$, $\Delta U(t)$, and $\overline{\Ucal}(t))$.}

\item There exists an ancillary controller
\eqn{
\pi_{\mathrm{arc}}:[t_i,t_f]\times\Xcal\times(\Xcal\times\Ucal\times 2^{\Xcal})\;\to\;\Ucal
}
such that, for all $(\theta_f,\theta_g)\in\Theta$ and disturbances modeled in \cref{asm:noise_derivatives}, the closed loop control law
\eqn{
u(t)=p_u(t)+\pi_{\mathrm{arc}}\big(t,\,x(t),\,p^\Omega(t)\big)
}
renders the tube forward-invariant:
\eqn{
x(t_i)\in \Omega(t_i)\ \Rightarrow\ x(t)\in \Omega(t),\quad \forall\, t\in[t_i,t_f].
}
\end{enumerate}
\end{definition}
A baseline nominal plan for the RCI tube trajectory can be obtained by solving
\begin{subequations}\label{prob:baseline_tube}
\begin{align}
\min_{\,p_x(\cdot),\,p_u(\cdot)}\; & J(p_x,p_u) \\
\text{s.t.}\;\; 
\dot p_x(t) &= f\big(p_x(t),\hat\theta_f\big) + g\big(p_x(t),p_u(t),\hat\theta_g\big), \\
\Omega(t)  &= p_x(t) \oplus \Ecal(t),  \forall t\in[t_0,t_f], \\
p_x(t)     &\in \overline{\mathcal S}(t),  p_u(t)     \in \overline{\mathcal U}(t), \forall t\in[t_0,t_f], \\
p_x(t_f)   &\in \overline{\mathcal G}, x(t_0) \in \Omega(t_0)
\end{align}
\end{subequations}
with mission cost
\begin{equation}\label{eq:mission_cost}
J(p_x,p_u)
= \int_{t_0}^{t_f} \ell\big(p_x(t),p_u(t)\big)\,dt + \ell_T\big(p_x(t_f)\big).
\end{equation}
The baseline ensures safety under worst-case uncertainty, yielding conservative plans. We instead require trajectories to remain safe while actively reducing uncertainty, subject to a prescribed budget on total mission cost.

\begin{prob}
\label{overall_prob}
Consider the system \eqref{eq:system_model}. Design a planning algorithm that, over the mission horizon, generates a sequence of robust tube trajectories whose concatenation defines the planned solution robust trajectory $p^\Omega_{\mathrm{sol}}=(p_x^{\mathrm{sol}},p_u^{\mathrm{sol}},\Omega^{\mathrm{sol}})$ over $[t_0, t_f]$. The solution must satisfy the tightened state and input constraints $p_x(t)\in\overline{\Scal}(t)$, $p_u^{\mathrm{sol}}(t)\in\overline{\Ucal}(t)$ for all $t\in[t_0,t_f]$, reach the goal set $p_x^{\mathrm{sol}}(t_f)\in \overline \Gcal$, and execute exploration to reduce width of the parameter set, i.e., $w_d(\Theta)$ while respecting budget constraint $J\big(p^\Omega_{\mathrm{sol}}\big)\le B$.
\end{prob}

\section{Proposed Solution Framework}
The overall framework is illustrated in Fig.~\ref{fig:method_squence}, which we propose as a solution to~\cref{overall_prob}. The key idea is to actively explore the environment to reduce uncertainty in the unknown parameters, while always guaranteeing safety and respecting the mission cost budget. At a high level, the framework operates as follows: first, a safe backup trajectory is generated (Fig.~\ref{fig:method_squence}[a]); next, multiple \emph{candidate pairs} are constructed, each consisting of a conservative trajectory that guarantees safety and an informative trajectory that seeks to reduce parameter uncertainty (Fig.~\ref{fig:method_squence}[b]). Unsafe candidate pairs are rejected (Fig.~\ref{fig:method_squence}[c]), and among the remaining valid pairs, the most valuable candidate is selected (Fig.~\ref{fig:method_squence}[d]). Finally, once the selected candidate is executed, the backup trajectory is updated iteratively for the next planning cycle (Fig.~\ref{fig:method_squence}[e]). By reducing parameter uncertainty, the robot becomes progressively less conservative in its planning, while ensuring the feasibility of the safety and budget constraints.

At each $t_k$, the algorithm proceeds as follows.

\subsection{Backup trajectory generation}

The process begins by computing the backup trajectory, which serves as a guaranteed safe fallback.

\begin{definition}[Backup trajectory]
\label{def:bak_traj}
The \emph{backup trajectory} generated at time $t_k$, denoted as
$p^{\Omega_{\mathrm{bak}}}_k = (p^{\mathrm{bak}}_{k,x},\,p^{\mathrm{bak}}_{k,u},\,\Omega^{\mathrm{bak}}_k)$, 
is the RCI tube trajectory over $[t_k,t_f]$ that coincides with the backup 
safe set $\Ccal$. In particular,
\eqn{
\Omega^{\mathrm{bak}}_k(t) \;=\; \Ccal_k(t) \;\subseteq\; \Scal(t), 
\quad \forall t \in [t_k,t_f],
}
so that the robust tube trajectory $p_k^{\Omega_{\mathrm{bak}}}$ is equivalent to the 
backup safe set over the planning horizon. Moreover, the ancillary robust 
controller serves as the backup controller $\pi_B$ ensuring that the closed-loop state satisfies
\eqn{
x(t_k)\in\Omega^{\mathrm{bak}}_k(t_k)\ \Longrightarrow\
x(t)\in\Omega^{\mathrm{bak}}_k(t)\subseteq\Scal(t), \forall t \in [t_k,t_f].
}
\end{definition}

Once the backup trajectory is generated, we construct a set of
\emph{candidate trajectories} that the robot may take.

\subsection{Candidate trajectory generation}

At each planning time $t_k$, candidate trajectories are generated in \emph{pairs}: a conservative one that preserves robustness and an informative one that may deviate to reduce uncertainty in $\theta$. The remaining mission is subdivided into horizons of fixed length $T_c$, forming the set
\begin{equation}
\label{eq:set_horizons}
\Tcal^{c}_k =
\big\{\, T^{c,k}_i \;:\; T^{c,k}_i = [\,t_k,\; t_k + iT_c\,]\,\big\}_{i=1}^{N_k},
\end{equation}
where the number of candidate pairs $N_k$ is given by
\begin{equation}
\label{eq:set_horizons_card}
    N_k = \max \big\{\, i \in \mathbb{N} \;:\; t_k + iT_c \leq t_f \,\big\}.
\end{equation}
In other words, starting from $t_k$, candidates are constructed by stacking multiples of $T_c$ until the terminal time would exceed the mission horizon $t_f$.


\begin{definition}[Conservative candidate]
\label{def:commit}
The $i^{th}$ \emph{conservative candidate} generated at time $t_k$ over time horizon $[t_k, t_k + T_i^{c,k}]$ is the robust tube trajectory
\eqn{
p^{\Omega_{\mathrm{cons}, i}}_{k}
  \;=\; \big(p^{\mathrm{cons}, i}_{k,x},\,p^{\mathrm{cons}, i}_{k,u},\,\Omega^{\mathrm{cons}, i}_k\big),
}
defined by
\eqn{
p^{\Omega_{\mathrm{cons}, i}}_{k}(t) \;=\; p^{\Omega_{\mathrm{bak}}}_k(t),
\quad \forall t \in [\,t_k,\,t_k+T_i^{c,k}\,].
}
Thus, the conservative candidate coincides with the backup tube on this
horizon and directly inherits tube invariance and constraint
preservation.
\end{definition}

\begin{definition}[Informative candidate] The $i^{th}$ \emph{informative candidate} generated at time $t_k$ over time horizon $[t_k, t_k + T_i^{c,k}]$ is the trajectory 
\eqn{
p^{\mathrm{info},i}_{k}
  \;=\; \big(p^{\mathrm{info},i}_{k,x},\,p^{\mathrm{info},i}_{k,u}\big)}
designed to reduce the uncertainty of the parameter $\theta$. Unlike the conservative candidate, it may not guarantee safety. However, it must rejoin the conservative at the terminal state:
\eqn{
p^{\mathrm{info},i}_{k,x}\!\big(t_k+T_i^{c,k}\big)
  \;=\; p^{\mathrm{cons},i}_{k,x}\!\big(t_k+T_i^{c,k}\big).
}
\end{definition}
The two trajectories together form a candidate pair.
\begin{definition}[Candidate pair] The $i^{th}$ \emph{candidate pair} generated at time $t_k$ associated with time horizon $[t_k, t_k + T_i^{c,k}]$ is
$\big(p^{\mathrm{info},i}_{k},\,p^{\Omega_{\mathrm{cons}},i}_{k}\big)$.
\end{definition}
Not all candidate pairs are safe, since the informative candidates may violate safety constraints. To capture the subset that is safe, we define:
\begin{definition}[Valid candidate pair]
\label{def:valid_pair}
A candidate pair $\big(p^{\mathrm{info},i}_{k},\,p^{\Omega_{\mathrm{cons}},i}_{k}\big)$
is \emph{valid} if there exists a tube $\Ecal^{\mathrm{info},i}_k(t)$\footnote{$\mathcal E_k^{\text{info},i}$ is a tube cross-section constructed
according to~\cref{def:tube-section}, evaluated along the candidate trajectory
$p_k^{\text{info},i}$.} around
$p^{\mathrm{info},i}_k$ such that
\begin{subequations}
\eqn{
\Omega^{\mathrm{info},i}_k(t) \;&=\; p^{\mathrm{info},i}_{k,x}(t)\,\oplus\,\Ecal^{\mathrm{info},i}_k(t), \\
p^{\Omega_{\mathrm{info}},i}_k 
  \;&=\; \big(p^{\mathrm{info}, i}_{k,x},\,p^{\mathrm{info},i}_{k,u},\,\Omega^{\mathrm{info},i}_{k}\big), 
}    
\end{subequations}
and the tightened constraints hold for all $t\in[\,t_k,\,t_k+T_i^{c,k}\,]$:
\eqn{
\Omega^{\mathrm{info},i}_k(t)\subseteq \overline\Scal(t), \quad
p^{\mathrm{info},i}_{k,u}(t)\in \overline\Ucal(t).
}
Both $p^{\Omega_{\mathrm{info}},i}_k$ and $p^{\Omega_{\mathrm{cons}},i}_k$ are RCI tube trajectories safe over the horizon, and the pair is \emph{valid}.
\end{definition}
The collection of all valid pairs generated at time $t_k$ is defined as the set,
\begin{equation}
\Vcal^{c}_k
=
\Big\{
\big(p^{\mathrm{info},i}_{k},\,p^{\Omega_{\mathrm{cons}},i}_{k}\big)
:\;
 \text{pair valid by \textit{Def}.~\ref{def:valid_pair}}\Big\}.
\end{equation}
We also define the set of all conservative candidates as 
\begin{equation}
\Vcal_{k}^{\mathrm{cons}} =
\big\{\, p^{\Omega_{\mathrm{cons}},i}_{k} \big\}_{i = 1}^{N_k}.
\end{equation}

\subsection{Committing the candidate trajectory}
Having defined the sets of valid pairs 
$\Vcal^{c}_k$ and conservative candidates 
$\Vcal_{k}^{\mathrm{cons}}$, the final step is to decide which trajectory to commit at time $t_k$. 
\textit{The guiding principle is to drive uncertainty reduction as quickly as possible, while never compromising safety or exceeding the budget.}
To implement this, each valid pair is scored according to the predicted reduction in uncertainty achieved by its informative trajectory, discounted by horizon length. 
The highest-scoring pair that satisfies the budget is selected; if none exist, the choice defaults to the conservative candidate with the smallest horizon, which is always safe. 
In this way, the executed trajectory ensures safety while consistently prioritizing rapid uncertainty reduction. 
We formalize this process as follows.

Each valid candidate pair 
$\big(p^{\Omega_{\mathrm{info},i}}_{k},\,p^{\Omega_{\mathrm{cons}},i}_{k}\big) \in \Vcal^{c}_k$
is assigned a score reflecting predicted uncertainty reduction along its informative trajectory. To prioritize early information gain, the score is penalized by horizon length:
\begin{equation}
    s^{c,k}_i \;=\; \exp\!\big(-\lambda T^{c,k}_i\big)\,\Delta w_i, 
    \qquad \lambda > 0,
\label{eq:score}
\end{equation}
where $\Delta w_i$ is the reduction in the \emph{average} directional width. If $\Theta^k$ is the current uncertainty set and $\Theta^{k+1,i}$ the predicted set after executing candidate $i$, then
\begin{equation}
    \Delta w_i \;=\; \frac{1}{|\Dcal|}\,\sum_{d \in \Dcal} \Big( w_d(\Theta^k) - w_d(\Theta^{k+1,i}) \Big),
\label{eq:avg-width-reduction}
\end{equation}
with $w_d(\Theta)$ as in \textit{Def}.~\ref{def:width}. The computation of $w_d(\Theta)$ is given in \cref{sec:uncertainty_pred}. 

To enforce budget feasibility, let $J^k_{\mathrm{exec}}$ be the cost of the committed trajectory up to $t_k$, and $J^k_{\mathrm{back}}$ the cost of the conservative backup from $t_k$ to $t_f$. For a candidate horizon $T^{c,k}_i$, the excess cost of the informative trajectory over its conservative counterpart is
\begin{equation}
    \Delta J_i \;=\; J\big(p^{\Omega_{\mathrm{info},i}}_{k}\big)
    \;-\; J\big(p^{\Omega_{\mathrm{cons}},i}_{k}\big).
\label{eq:cost-diff}
\end{equation}
A candidate pair is feasible if, after executing its informative segment and reverting to the backup, the total mission cost does not exceed the budget $B$. Equivalently,
\begin{equation}
    \Fcal^{c}_k =
    \Big\{\, i : 
    \begin{aligned}[t]
        \big(p^{\Omega_{\mathrm{info},i}}_{k},&\,p^{\Omega_{\mathrm{cons}},i}_{k}\big) \in \Vcal^{c,k}, \\
        &J^k_{\mathrm{exec}} + J^k_{\mathrm{back}} + \Delta J_i \;\leq B
    \end{aligned}
    \,\Big\}.
\label{eq:feasible-set}
\end{equation}
Only those valid pairs that satisfy the budget constraint are retained in $\Fcal^{c}_k$. If $\Fcal^{c}_k  \neq \emptyset$, we select the candidate with the highest score:
\begin{equation}
    i^\star \;=\; \arg\max_{i \in \Fcal^{c,k}} \; s^{c,k}_i.
\label{eq:best-candidate}
\end{equation}
The informative trajectory of the selected pair is then committed. If $\Fcal^{c,k}=\emptyset$, no valid pair or all over budget, we fall back to the shortest horizon conservative candidate. 

\begin{definition}[Committed trajectory]
At planning time $t_k$, the committed trajectory is defined as
\begin{equation}
p^{\Omega_{\mathrm{com}}}_k(t) \;=\;
\begin{cases}
    p^{\Omega_{\mathrm{info},i^\star}}_{k}(t), & \Fcal^{c,k} \neq \emptyset, \\[6pt]
    p^{\Omega_{\mathrm{cons},1}}_{k}(t), & \Fcal^{c,k} = \emptyset,
\end{cases}
\quad t \in [t_k,t_f],
\label{eq:committed}
\end{equation}
where $p^{\Omega_{\mathrm{cons},1}}_{k}$ denotes the conservative candidate corresponding to the smallest horizon in the set $\Vcal_{k}^{\mathrm{cons}}$.
\end{definition}
After executing the committed trajectory, we measure shrinkage via set membership identification (SMID) \cite{milanese2004set, lopez2019adaptive}, update the backup, and then the process repeats. Candidate generation, prediction of $w_d(\Theta)$, and theoretical guarantees appear in Sections~\ref{sec:cand_gen}, \ref{sec:uncertainty_pred}, and \ref{sec:theo_gua}.

\section{Constructing Informative Candidate Trajectories}
\label{sec:cand_gen}

At each planning time $t_k$, we generate candidate trajectories that may deviate from the conservative backup to promote identification. The idea is to excite the regressor $\Phi(x,u)$ and accelerate the reduction of uncertainty in the admissible set $\Theta_k$. This section details the construction of one informative candidate trajectory.

Let $T_i^{c,k}\in\Tcal^{c}_k$ be any horizon from the set defined in equations \eqref{eq:set_horizons} and \eqref{eq:set_horizons_card}. For this horizon, we compute the informative candidate
$p^{\text{info},i}_k = \big(p^{\text{info},i}_{k,x},\,p^{\text{info},i}_{k,u}\big)$
by solving the following optimization over $[t_k,\,t_k+T_i^{c,k}]$:
\begin{subequations}
\label{opt_cand_prob}
 \eqn{
 \min_{x(\cdot),\,u(\cdot)} \quad &
\int_{t_k}^{t_k+T_i^{c,k}} \ell\big(x(\tau),u(\tau)\big)\,d\tau - \gamma\,\log\det\big(\Gcal_k(T_i^{c,k})\big) \\
\text{s.t.}\quad &
\dot{x}(t)= f\big(x(t),\hat\theta_f\big)+g\big(x(t),\hat\theta_g\big)u(t), \\
& x(t_k)=x_k,\\
& x(t_k+T_i^{c,k}) = p^{\Omega,\mathrm{cons},i}_{k,x}(t_k+T_i^{c,k}), \label{rejoin_cons}
 }   
\end{subequations}
Here $\rho,\gamma>0$ weight control effort and identifiability; $(\hat\theta_f,\hat\theta_g)$ are nominal parameters; and $\Gcal_k(T)$ is the finite–horizon Gramian (\textit{Def}.~\ref{def:PE}). The $\log\det(\Gcal_k(\cdot))$ term promotes regressor excitation, \eqref{rejoin_cons} reconnects to the backup. See \cite{PE_1_narendra1987persistent} on choosing $\alpha$. Let $(x^\star(\cdot),u^\star(\cdot))$ denote the solution of \eqref{opt_cand_prob}. We set
$p^{\text{info},i}_{x,k}(t)=x^\star(t),\quad
p^{\text{info},i}_{u,k}(t)=u^\star(t),\quad \forall t\in[t_k,t_k+T_i^{c,k}].$

\begin{remark}During informative candidate generation, robustness to bounded uncertainties and satisfaction of the safety constraints are not explicitly enforced. After obtaining $p^{\mathrm{info},i}_k$, safety is assessed by constructing a tube $\Omega^{\mathrm{info},i}_k$ around the trajectory. If the induced tube satisfies the tightened constraints, the valid pair $\big(p^{\mathrm{info},i}_k,\, p^{\Omega,\mathrm{cons},i}_k\big)$ is formed according to \textit{Def}.~\ref{def:valid_pair}.
\end{remark}

\section{Uncertainty shrinkage prediction}
\label{sec:uncertainty_pred}

In this section, we quantify the pre-execution impact of a planned trajectory on parameter uncertainty via the \emph{width} in \textit{Def}.~\ref{def:width}. We present the method for a general trajectory $p=(p_x,p_u)$ on $[t_i,t_f]$. 
Let $t_j$, $j=1,\dots,N_j$ denote the sampling times, and define $
\Phi_j=\Phi\big(p_x(t_j),\,p_u(t_j)\big)\in\R^{c\times p},\quad
z_j=z(t_j)\in\R^c,\quad
w_j=w(t_j)\in\R^c,\;\; \|w_j\|_\infty\le \overline w .
$
Stacking the regressors gives
\eqn{
A =
\begin{bmatrix}\Phi_1 & \Phi_2 & \cdots & \Phi_{N_j}\end{bmatrix}^{\top}
\in\R^{M\times p},\quad M=N_j c .
\label{eq:Astack}
}

Now let $\theta^\star$ denote the true (unknown) parameter. Two parameters $\theta$ and $\theta^\star$ can produce the same stacked data under bounded noises $w_1,w_2$ if and only if
\eqn{
A\theta + w_1 = A\theta^\star + w_2, 
\quad 
\|w_1\|_{\infty},\|w_2\|_{\infty} \le \overline w .
}
The key question is: \textit{under what conditions can two distinct parameters $\theta$ and $\theta^\star$ produce the same measurements within the noise bound?}
If many such parameters remain feasible, the uncertainty set stays large; if few remain, it shrinks.
Hence, the fewer alternative parameters that fit the data within the noise bound, the greater the uncertainty reduction achieved by the planned trajectory.
\begin{remark}
The uncertainty shrinkage prediction in this section characterizes uncertainty reduction along a planned trajectory under bounded additive disturbances and bounded parametric uncertainty, but does not account for deviation between the planned trajectory and the executed closed-loop state-input trajectory, which is left for future work.
\end{remark}
\begin{lemma}
\label{lemma:2w}
Let $\theta^\star\in\Theta$ be the (unknown) true parameter, and define 
$e_\theta=\theta-\theta^\star\in\R^p$. 
There exists some $w_j$ with $\|w_j\|_{\infty}\le \overline w$ such that
\eqn{
\|\Phi_j(\theta^\star-\theta)+w_j\|_\infty 
\le \overline w 
\iff 
\|\Phi_j e_\theta\|_\infty \le 2\overline w ,
\label{eq:single-iff}
}
and combining across all $j$ yields $\|A e_\theta\|_\infty \le 2\overline w$.
\end{lemma}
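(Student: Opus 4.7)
The plan is to treat this as a routine triangle–inequality argument: prove the biconditional one sample at a time and then aggregate across $j$. The preliminary observation is that $\Phi_j(\theta^\star-\theta)=-\Phi_j e_\theta$, so in fact $\|\Phi_j(\theta^\star-\theta)\|_\infty=\|\Phi_j e_\theta\|_\infty$, which is what makes the equivalence plausible in the first place.

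For the forward implication, I would assume a witness $w_j$ exists with both $\|w_j\|_\infty\le\overline w$ and $\|\Phi_j(\theta^\star-\theta)+w_j\|_\infty\le\overline w$, and then apply the triangle inequality: $\|\Phi_j e_\theta\|_\infty = \|\Phi_j(\theta^\star-\theta)+w_j-w_j\|_\infty \le \overline w+\overline w = 2\overline w$. This step is mechanical and needs no further structure on $\Phi_j$.

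For the reverse implication, I would exhibit an explicit witness. Given $\|\Phi_j e_\theta\|_\infty\le 2\overline w$, set $w_j := \tfrac{1}{2}\,\Phi_j e_\theta$. Then $\|w_j\|_\infty\le\overline w$ by construction, and $\Phi_j(\theta^\star-\theta)+w_j = -\tfrac{1}{2}\,\Phi_j e_\theta$, whose $\infty$-norm is again at most $\overline w$. The only subtle point is the halving: a naive choice like $w_j=-\Phi_j(\theta^\star-\theta)$ cancels the first term exactly but can blow $\|w_j\|_\infty$ up to $2\overline w$, violating the admissible-noise radius. Splitting the deviation symmetrically between the two terms is what closes the argument.

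To conclude, I would note that $A$ is the vertical stacking of the $\Phi_j$, so $A e_\theta$ is the vertical concatenation of the blocks $\Phi_j e_\theta$, and $\|A e_\theta\|_\infty = \max_j \|\Phi_j e_\theta\|_\infty$. Requiring the per-sample bound to hold simultaneously for all $j$ is therefore equivalent to $\|A e_\theta\|_\infty\le 2\overline w$, giving the aggregated claim. There is no genuine obstacle here; the only thing worth flagging is the factor-of-two slack between the admissible-noise radius $\overline w$ and the discrimination threshold $2\overline w$, which the witness $w_j=\tfrac{1}{2}\Phi_j e_\theta$ makes tight.
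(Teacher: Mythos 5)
Your proof is correct, and its overall structure matches the paper's: triangle inequality for the forward direction, an explicit noise witness for the reverse, and vertical stacking for the aggregation. The one genuine difference is the witness in the reverse direction: the paper takes $w_j=\mathrm{clip}(\Phi_j e_\theta,\overline w)$, i.e.\ the componentwise saturation of $\Phi_j e_\theta$ at radius $\overline w$, whereas you take $w_j=\tfrac{1}{2}\Phi_j e_\theta$. Both are admissible ($\|w_j\|_\infty\le\overline w$) and both leave a residual $\Phi_j e_\theta-w_j$ of $\infty$-norm at most $\overline w$; your halving witness is simpler to state and verify, while the paper's clipped witness drives the residual all the way to zero on every component where $|(\Phi_j e_\theta)_i|\le\overline w$, which is the more natural object if one later wants the exact set of compatible noise realizations rather than just existence. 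Your closing observation that $\|Ae_\theta\|_\infty=\max_j\|\Phi_j e_\theta\|_\infty$ is also a cleaner justification of the "combining across all $j$" step than the paper's one-line assertion.
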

\begin{proof} 

We first show that \eqref{eq:single-iff} holds. 
($\Rightarrow$) Since $e_{\theta} = \theta - \theta^\star$, we have $\Phi_j(\theta^\star-\theta) = -\Phi_j e_{\theta}$. Thus \begin{subequations}\label{lemma:trick1}
\eqn{
\|\Phi_j(\theta^\star-\theta)+ w_j\|_\infty
&= \|-\Phi_j e_\theta+ w_j\|_\infty \\
&= \|\Phi_j e_\theta- w_j\|_\infty \le \overline w 
}    
\end{subequations}
Now, by the triangle inequality:
\begin{subequations}
 \eqn{
\|\Phi_j e_\theta\|_\infty 
&= \|(\Phi_j e_\theta -w_j) + w_j \|_\infty \\ 
&\le \|\Phi_j e_\theta- w_j\|_\infty + \|w_j\|_\infty \\
&\le \overline w+\overline w
=2\overline w .
}   
\end{subequations}

($\Leftarrow$) Let  $\|\Phi_j e_\theta\|_\infty\le 2\overline w$, choose 
$w_j =\,\mathrm{clip}(\Phi_j e_\theta,\,\overline w)$, where $\mathrm{clip}(\Phi_j e_\theta,\,\overline w) = \mathrm{sign}(\Phi_j e_\theta) \odot \min\{|\Phi_j e_\theta|,\overline w\}$. Thus, 
\begin{subequations}
    \eqn{
    \|w_j\|_\infty &\le \overline w \\
    \|\Phi_j e_\theta - w_j\|_\infty &= \max \{\|\Phi_j e_\theta\|_\infty - \overline w, 0\} \le \overline w 
    }
\end{subequations}
Since $\Phi_j(\theta^\star-\theta) = -\Phi_j e_{\theta}$ and using \eqref{lemma:trick1}, we have 
\eqn{
\|\Phi_j(\theta^\star-\theta)+ w_j\|_\infty = \|\Phi_j e_\theta- w_j\|_\infty 
\le \overline w
}
Thus, combining \eqref{eq:single-iff} for all $j$ yields $\|A e_\theta\|_\infty \le 2\overline{w}$.
\end{proof}

\cref{lemma:2w} states that two parameters are \emph{indistinguishable} 
when their offset $e_\theta$ satisfies $\|A e_\theta\|_\infty \le 2\overline{w}$, 
meaning that the observed data could equally well be explained by either parameter, given bounded noise. Offsets exceeding $2\overline{w}$ are ruled 
out, motivating the definition of the feasible error set.
\begin{definition}[Error set]\label{def:Ecal}
Given the planned trajectory (through $A$), $\overline w$, and $e_\theta = \theta - \theta^\star$, define
\eqn{
\Ecal_\theta \;:=\; \{\,e_\theta\in\R^p:\ \|A e_\theta\|_\infty \le 2\overline w\,\}.
\label{eq:E-def}
}
\end{definition}
The error set $\Ecal_\theta := \{e_\theta\in\R^p : \|Ae_\theta\|_\infty \le 2\overline w\}$ collects all feasible offsets $e_\theta = \theta-\theta^\star$ under the trajectory and noise bound. 
Equivalently, $\theta$ is feasible iff $\theta-\theta^\star \in \Ecal_\theta$, i.e., $\theta \in \theta^\star + \Ecal_\theta$. 
Since $\theta \in \Theta$, the feasible parameter set is
\eqn{
\Theta_{N_j}(\theta^\star) \;=\; \Theta \cap (\theta^\star + \Ecal_\theta).
}
where $\Theta_{N_j}(\theta^\star)$ is the predicted parameter set after $N_j$ planned samples. We measure the width of the predicted set $\Theta_{N_j}(\theta^\star)$ along a direction $d\in\R^p$. 
Choosing $d=e_{\theta,i}$ yields the width of the $i$-th parameter. 
By definition,
\eqn{
w_d \big(\Theta_{N_j}(\theta^\star)\big) 
= w_d \big(\Theta \cap (\theta^\star+\Ecal_\theta)\big).
}
Because the width of an intersection cannot exceed that of either set, and width is translation-invariant ($w_d(\theta^\star+\Ecal_\theta)=w_d(\Ecal_\theta)$), we obtain
\eqn{
w_d \big(\Theta_{N_j}(\theta^\star)\big) 
\;\le\; \min \!\big(w_d(\Theta),\, w_d(\Ecal_\theta)\big).
}
We next show how $w_d(\Ecal_\theta)$ can be computed from the planned trajectory.

\begin{figure*}[t]
  \centering
  \includegraphics[width=2.05\columnwidth]{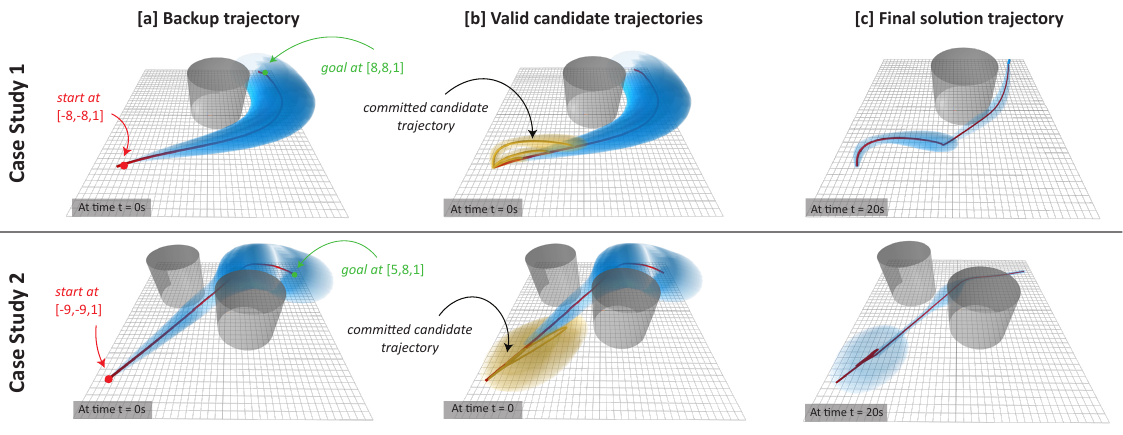}
  \caption{Backup, candidate, and final solution trajectories for Case Study~1 (top) and Case Study~2 (bottom).}
  \vspace{-13pt}
  \label{fig:trajectories}
\end{figure*}
\begin{lemma}\label{lem:width-2h}
 For any $d\in\R^p\!\setminus\!\{0\}$,
\eqn{
\label{eq:width-is-2h}
w_d(\Ecal_\theta)=2\,h_{\Ecal_\theta}(d),
\qquad
h_{\Ecal_\theta}(d)=\sup_{e_\theta \in \Ecal_{\theta}} d^\top e_\theta}
\end{lemma}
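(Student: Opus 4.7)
The plan is to reduce the claim to the central symmetry of $\Ecal_\theta$. First I would unpack the width definition from \textit{Def}.~\ref{def:width}, writing
\eqnN{
w_d(\Ecal_\theta)
= \sup_{e_\theta \in \Ecal_\theta} d^\top e_\theta
- \inf_{e_\theta \in \Ecal_\theta} d^\top e_\theta
= h_{\Ecal_\theta}(d) + \sup_{e_\theta \in \Ecal_\theta} (-d^\top e_\theta),
}
so that the task becomes showing $\sup_{e_\theta \in \Ecal_\theta} (-d^\top e_\theta) = h_{\Ecal_\theta}(d)$.

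Next I would establish that $\Ecal_\theta$ is symmetric about the origin. This follows immediately from the defining inequality in \cref{def:Ecal}: since $\|A(-e_\theta)\|_\infty = \|A e_\theta\|_\infty$, we have $e_\theta \in \Ecal_\theta$ iff $-e_\theta \in \Ecal_\theta$. Consequently, the map $e_\theta \mapsto -e_\theta$ is a bijection of $\Ecal_\theta$ onto itself, which gives
\eqnN{
\sup_{e_\theta \in \Ecal_\theta} (-d^\top e_\theta)
= \sup_{e_\theta' \in \Ecal_\theta} d^\top e_\theta'
= h_{\Ecal_\theta}(d),
}
after the change of variable $e_\theta' = -e_\theta$.

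Combining the two displays yields $w_d(\Ecal_\theta) = 2 h_{\Ecal_\theta}(d)$, which is exactly \eqref{eq:width-is-2h}. I do not foresee a substantive obstacle here; the only item worth stating carefully is that $\Ecal_\theta$ is nonempty (it contains $0$) so the supremum is well defined and nonnegative, and that $h_{\Ecal_\theta}(d)$ is finite because $\Ecal_\theta$ is a polytope defined by finitely many affine constraints. Both points are immediate from \cref{def:Ecal}, so the proof is essentially a two-line argument whose content is simply the origin-symmetry of the error set.
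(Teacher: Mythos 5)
Your argument is correct and is essentially the paper's own proof: both rest on the origin-symmetry of $\Ecal_\theta$ (from $\|A(-e_\theta)\|_\infty=\|Ae_\theta\|_\infty$) to convert $-\inf_{e_\theta\in\Ecal_\theta}d^\top e_\theta$ into $\sup_{e_\theta\in\Ecal_\theta}d^\top e_\theta$, with your version merely spelling out the change of variables more explicitly. One small caveat on your side remark: $\Ecal_\theta$ is a polyhedron but need not be bounded (if $A$ has a nontrivial null space and $d$ has a component in it, then $h_{\Ecal_\theta}(d)=+\infty$), so finiteness is not guaranteed --- though the identity $w_d(\Ecal_\theta)=2h_{\Ecal_\theta}(d)$ still holds in the extended reals and the lemma does not claim finiteness.
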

\begin{proof}

For any $e_\theta \in \mathcal{E}_\theta$,
\eqn{
\|Ae_\theta\|_\infty = \|-Ae_\theta\|_\infty =
\|A(-e_\theta)\|_\infty   \le 2\overline w.
}
Since $-e_\theta \in \mathcal{E}_\theta$ and $0 \in \mathcal{E}_\theta$, this implies $w_d(\Ecal_\theta) = \sup_{e_\theta \in \Ecal_\Theta} d^\top e_\theta - \inf_{e_\theta \in \Ecal_\Theta} d^\top e_\theta = 2\sup_{e_\theta \in \Ecal_\Theta} d^\top e_\theta.$
\end{proof}

Since $\|Ae_\theta\|_\infty \leq 2\overline w \iff -2\overline w\mathbf{1}_M \le A e_\theta \le 2\overline w\mathbf{1}_M$,
by \eqref{eq:width-is-2h} it suffices to compute
\eqn{
\label{eq:lp_eq}
\begin{aligned}
\max_{e_\theta\in\R^p}\;& d^\top e_\theta \\
\text{s.t.}\;& -2\overline w\mathbf{1}_M \le A e_\theta \le 2\overline w\mathbf{1}_M .
\end{aligned}
}
Now, we introduce multipliers $\lambda_1,\lambda_2\in\R^M_{\ge 0}$ and write the Lagrangian of the \eqref{eq:lp_eq} as $\Lcal(e_\theta,\lambda_1,\lambda_2)
= d^\top e_\theta + \lambda_1^\top(2\overline w\mathbf 1_M - Ae_\theta) + \lambda_2^\top(2\overline w\mathbf 1_M + Ae_\theta).$
The dual is finite iff $A^\top(\lambda_2-\lambda_1)=d$, giving
\eqn{
\begin{aligned}
h_{\Ecal_\theta}(d) 
&= \min_{\lambda_1,\lambda_2\ge 0} 
\;2\overline w\big(\mathbf 1_M^\top\lambda_1+\mathbf 1_M^\top\lambda_2\big) \\
&\text{s.t.}\quad A^\top(\lambda_2-\lambda_1)=d .
\end{aligned}
}
Let $\lambda_d=\lambda_2-\lambda_1$. Choosing 
$\lambda_2=(\lambda_d)_+$ and $\lambda_1=(\lambda_d)_-$ yields
\eqn{
\label{eq:prop_des_form}
h_{\Ecal_\theta}(d)=2\overline w \min_{\lambda_d:\,A^\top\lambda_d=d}\|\lambda_d\|_1 .
}
Therefore,
\eqn{
w_d\big(\Theta_{N_j}(\theta^\star)\big) 
\le \min\big(w_d(\Theta),\; 2h_{\Ecal_\theta}(d)\big).
}

If the executed closed-loop state-input trajectory coincides with the planned trajectory, then the predicted width upper bounds the width computed from the realized measurements, since the prediction is computed under worst-case bounded additive disturbances.

\section{Theoretical guarantees}
\label{sec:theo_gua}
This section establishes the safety and budget feasibility of the proposed framework, showing that the solution defined by the committed tube sequence satisfies the safety constraints and the prescribed budget (\cref{thm:psol_safe_budget}).

\begin{assumption}
\label{assump:mono-tube-cost}
Over time horizon $[t_k,t_f]$, let $p^{\Omega(\Theta)}$ be the RCI tube trajectory returned by \eqref{prob:baseline_tube} when the tube cross-section $\Ecal(t;\Theta)$ is built from $\Theta$ (\textit{Defs}.~\ref{def:tube-section},\ref{def:robust-tube}). 
We assume monotonicity under set inclusion:
\eqn{
\Theta' \subseteq \Theta \ \Longrightarrow\ J\big(p^{\Omega(\Theta')}\big) \le J\big(p^{\Omega(\Theta)}\big).
}
\end{assumption}

\begin{theorem}
\label{thm:psol_safe_budget}
Let $t_0<t_1<\cdots<t_k\cdots$ be the commit times and let $p_k^{\Omega\mathrm{com}}$ denote the tube committed at $t_k$ (Def.~\ref{def:commit}). 

Define the solution as the set of committed tubes
\eqn{
p^\Omega_{\mathrm{sol}}=\{\,p_0^{\Omega\mathrm{com}},\,p_1^{\Omega\mathrm{com}},\,\cdots,\,p_k^{\Omega\mathrm{com}}\, \cdots\}.
}
Assume $J_{\mathrm{back}}^0\le B$. At each $t_k$, generate valid pairs (Def.~\ref{def:valid_pair}), form the budget–feasible set $F_j^c$ given in~\eqref{eq:feasible-set}, and commit per Def.~\ref{def:commit}. 
If the system tracks each committed tube, then $p^\Omega_{\mathrm{sol}}(t)\in \overline{S}(t)$ for all $t\in[t_0,t_f]$ and $J\big(p^\Omega_{\mathrm{sol}}\big)\le B$.
\end{theorem}

\begin{proof}
The argument follows by induction, adapting the reasoning in \cite{agrawal2024gatekeeper}. 
At each decision time $t_k$, the committed trajectory is either (i) the conservative RCI tube, or (ii) the informative element of a valid pair. 
In the latter case, the informative trajectory is tube-safe on $[t_k,t_{k+1}]$ and is guaranteed to rejoin a robust tube at $t_{k+1}$. 
By construction, concatenation of tube-safe trajectories preserves safety over time. Budget feasibility is verified at time $t_k$ against the backup defined on $[t_k,t_f]$ by defining the feasible set~\eqref{eq:feasible-set}.
By \cref{assump:mono-tube-cost}, the backup recomputed at $t_{k+1}$ is no more expensive than the backup used in the feasibility check at $t_k$. 
Therefore, the cost of the planned trajectories, given by the sum of the committed informative segment and the subsequently applied backup, does not exceed the prescribed budget. 
If no informative candidate satisfies the feasibility condition, the conservative tube is selected, in which case the accumulated cost does not increase. Beyond the last decision time, the system evolves along the terminal tube until $t_f$, yielding a solution that satisfies
\eqn{
p^\Omega_{\mathrm{sol}}(t) \in \mathcal \overline{S}(t), 
\quad \forall t \in [t_0,t_f],
\quad
J\left(p^\Omega_{\mathrm{sol}}\right) \le B .
}
\end{proof}

\section{Results \& Discussion}\label{sec:results}

\begin{table}[ht]
\centering
\scriptsize
\begin{tabular}{c|c|c|c|c|c}
\hline
\multirow{2}{*}{\textbf{System}} &
\multirow{2}{*}{\textbf{Method}} &
\multirow{2}{*}{\makecell{\textbf{Budget}\\\textbf{(\%)}}} &
\multirow{2}{*}{\makecell{\textbf{Total Cost}\\\textbf{(\%)}}} &
\multicolumn{2}{c}{\textbf{Uncertainty Reduction (\%)}} \\
\cline{5-6}
 &  &  &  & \textbf{Param 1} & \textbf{Param 2} \\
\hline\hline
\multirow{2}{*}{1}
& Baseline 1 & --    & 100.0 & 0  & -- \\
& Proposed   & 110.0 & 82.5  & 88.0 & -- \\
\hline
\multirow{2}{*}{2}
& Baseline 1 & --    & 100.0 & 0  & 0 \\
& Proposed   & 110.0 & 81.3  & 34.0 & 88.8 \\
\hline
\end{tabular}
\caption{Budget and cost are shown as percentages relative to the baseline solution (100\%). 
For System 2, reductions are reported per parameter dimension.}
\label{tab:comparison}
\end{table}

We validate the framework on two quadrotor systems. 
Backup trajectories are generated with tube MPC \cite{lopez2019dynamic_robust2} and tracked using a sliding mode controller. 
The cost functional,
\eqn{
    J(p_x,p_u) \;=\; \int_{t_0}^{t_f} \big( \alpha \|u(t)\|^2 + \beta \|p_x(t)-r_{\text{goal}}\|^2 \big)\,dt,
}
penalizes control effort and deviation from the goal, where $\alpha,\beta>0$ are weights, $p_x(t)$ the nominal state, $u(t)$ the input, and $r_{\text{goal}}$ the goal. 
In both cases we set $T_C=2.0$s.

\subsubsection{Case Study 1: Quadrotor with Drag}
The first case study, illustrated in Fig.~\ref{fig:trajectories}, considers a quadrotor modeled as a double integrator with nonlinear aerodynamic drag,
\eqn{
    \ddot{r} = -C_d \|\dot{r}\| \dot{r} + g + u + d,
    \label{eq:drag_dynamics}
}
where $r \in \R^3$ is the inertial position, $g \in \R^3$ the gravitational acceleration, $u \in \R^3$ the control input, $d \in \R^3$ an additive disturbance, and $C_d \in \R$ the unknown drag coefficient. The measurement $y \in \R^3$ corresponds to $r$.  

As shown in Fig.~\ref{fig:parameter_bounds}, committing a $6$-second informative trajectory reduced the admissible interval for $C_d$, tightening the bounds around the true parameter and demonstrating the framework’s ability to shrink parametric uncertainty online.

\begin{figure*}[t]
  \centering
  \includegraphics[width=2.05\columnwidth]{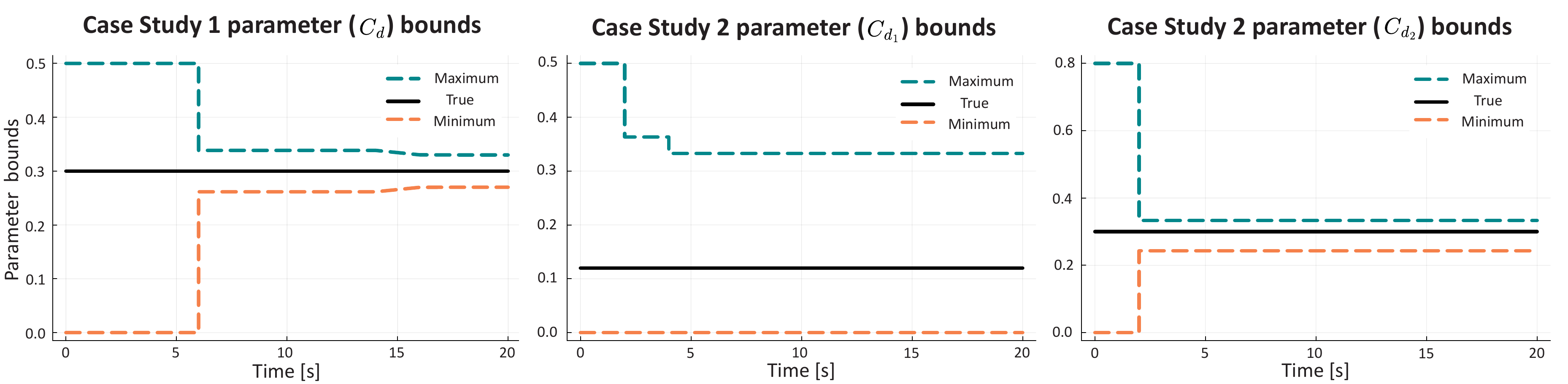}
  \caption{Parameter bound evolution are shown for two studies: Case Study~1 ($C_d$, left) and Case Study~2 ($C_{d_1}$, middle; $C_{d_2}$, right).}
  \vspace{-13pt}
  \label{fig:parameter_bounds}
\end{figure*}
Table~\ref{tab:comparison} reports the corresponding mission cost. Relative to the conservative baseline (set to $100\%$), the proposed method achieved only $82.5\%$ of the cost, while remaining within the budget of $110\%$. Thus, the approach not only reduced parameter uncertainty but also improved overall efficiency compared to the baseline backup solution.

\subsubsection{Case Study 2: Quadrotor with Vector Drag}
We extend the setup of Case Study~1 by considering a quadrotor with vector drag dynamics:
\begin{equation}\label{eq:quad_vec_drag}
    \ddot r = -C_{d_1}\dot r - C_{d_2}\,\|\dot r\|\,\dot r + g + u + d,
\end{equation}
where $C_{d1},C_{d2}\in\R$ are the unknown drag coefficients.  

In this case, the robot executed informative trajectories that reduced the parameter set in both directions, tightening the bounds from $[0.0,\,0.50]\times[0.0,\,0.80]$ to $[0.0,\,0.33]\times[0.25,\,0.34]$. 
The asymmetric shrinkage reflects the relative excitation of the regressors: more data were collected along $C_{d2}$, yielding a stronger contraction of its bounds. 
Since candidate generation did not enforce safety, and the robot operated in a narrow corridor, many informative candidates were invalidated. 
Consequently, fewer safe informative trajectories were committed, producing only modest reduction in $C_{d1}$ compared to $C_{d2}$. 
The total cost was $81.3\%$, well below the $110\%$ budget, normalized to the $100\%$ baseline.

\subsubsection{Implementation details}

In practice, generating a robust tube trajectory based on the method described in \cite{lopez2019dynamic_robust2} requires solving a nonlinear optimization problem and can take anywhere from 100 milliseconds to one second, depending on the problem size and solver warm-start conditions. The generation of an informative candidate trajectory is typically faster, on the order of a few hundred milliseconds, though this is also highly dependent on the system dynamics and horizon length. Both the backup and candidate trajectory optimization problems are implemented using InfiniteOpt.jl, which provides a flexible framework for modeling dynamic optimization problems in Julia.

\section{Conclusion}
We present a dual control framework that guarantees safety and budget feasibility while reducing parameter uncertainty.
By committing only trajectories that satisfy safety and cost constraints, the framework enables reliable execution with uncertainty reduction.
Case studies show lower mission cost than a conservative baseline and tighter parameter bounds.
Future work will incorporate safety in candidate generation, quantify the link between uncertainty reduction and cost decrease to formalize dual control trade-offs, and develop prediction methods for the width of the parameter set that account for tracking error rather than neglecting deviations between planned and executed trajectories.

\nocite{*}
\bibliographystyle{IEEEtran}
\bibliography{main.bib}

\end{document}